\newtheorem*{formalization*}{Formalization}
\newcommand{\prob}[1]{\Pr[{#1}]}
\newcommand{\Ex}[1]{\mathbb{E}[{#1}]}
\newcommand{\ATE}[1]{\textsc{ATE}({#1})}
\newcommand{\CATE}[1]{\textsc{CATE}({#1})}
\newcommand{\ACATE}[1]{\textsc{ACATE}({#1})}
\newcommand{\data}{\ensuremath{\mathsf{data}}}
\newcommand{\mainalg}{\ensuremath{\mathsf{CausalSAT}}}
\newcommand{\hillclimbing}{\ensuremath{\mathsf{HillClimbing}}}
\newcommand{\causaldiag}{\ensuremath{\mathsf{causalGraph}}}
\newcommand{\estimate}{\ensuremath{\mathsf{Estimate}}}
\newcommand{\idfyandest}{\ensuremath{\mathsf{Identify\&Estimate}}}
\newcommand{\estimates}{\ensuremath{\mathsf{estimates}}}
\newcommand{\estimand}{\ensuremath{\mathsf{estimand}}}
\newcommand{\identify}{\ensuremath{\mathsf{Identify}}}
\newcommand{\refute}{\ensuremath{\mathsf{Refute}}}
\newcommand{\pass}{\ensuremath{\mathsf{passRefutation}}}
\newcommand{\emptygraph}{\ensuremath{\mathsf{EmptyGraph}}}
\newcommand{\addedge}{\ensuremath{\mathsf{AddEdge}}}
\newcommand{\whitelist}{\ensuremath{\mathsf{whiteList}}}
\newcommand{\blacklist}{\ensuremath{\mathsf{blackList}}}
\newcommand{\bestdiag}{\ensuremath{\mathsf{bestGraph}}}
\newcommand{\bestscore}{\ensuremath{\mathsf{bestScore}}}
\newcommand{\computescore}{\ensuremath{\mathsf{ComputeScore}}}
\newcommand{\fail}{\ensuremath{\mathsf{Failure}}}
\newcommand{\score}{\ensuremath{\mathsf{score}}}
\newcommand{\updated}{\ensuremath{\mathsf{updated}}}
\newcommand{\false}{\ensuremath{\mathsf{False}}}
\newcommand{\true}{\ensuremath{\mathsf{True}}}
\newcommand{\perturbdiag}{\ensuremath{\mathsf{perturbGraph}}}
\newcommand{\currentdiag}{\ensuremath{\mathsf{currentGraph}}}
\newcommand{\pseudoand}{\textbf{and }}
\newcommand{\pseudoor}{\textbf{or }}
\newcommand{\treat}{\ensuremath{\mathsf{treatment}}}
\newcommand{\outcome}{\ensuremath{\mathsf{outcome}}}
\newcommand{\query}{\ensuremath{\mathsf{query}}}
\newcommand{\bdoorvar}{\ensuremath{\mathsf{backdoorVar}}}
\newcommand{\bdoorset}{\ensuremath{\mathsf{backdoorSet}}}
\newcommand{\addtolist}{\ensuremath{\mathsf{AddToList}}}
\newcommand{\bdoorsize}{\ensuremath{\mathsf{backdoorSize}}}
\newcommand{\candset}{\ensuremath{\mathsf{candidateSet}}}
\newcommand{\eligibleset}{\ensuremath{\mathsf{eligibleSet}}}
\newcommand{\pseudobreak}{\textbf{break}}
\newcommand{\sizeof}{\ensuremath{\mathsf{SizeOf}}}
\newcommand{\linreg}{\ensuremath{\mathsf{LinearRegression}}}
\newcommand{\coefs}{\ensuremath{\mathsf{coefficients}}}
\newcommand{\getcoef}{\ensuremath{\mathsf{GetCoefficientOf}}}
\newcommand{\bdoorgraph}{\ensuremath{\mathsf{backdoorGraph}}}
\newcommand{\Exp}[1]{\ensuremath{\mathbb{E}\left[{#1}\right]}}
\newcommand{\branchheu}{\ensuremath{\mathsf{Branching}}\xspace}
\newcommand{\restartheu}{\ensuremath{\mathsf{Restart}}\xspace}
\newcommand{\activity}{\ensuremath{\mathsf{Activity}}\xspace}
\newcommand{\uipuse}{\ensuremath{\mathsf{UIP}}\xspace}
\newcommand{\prop}{\ensuremath{\mathsf{Propagation}}\xspace}
\newcommand{\lastouch}{\ensuremath{\mathsf{LastTouch}}\xspace}
\newcommand{\timeinsolver}{\ensuremath{\mathsf{Time}}\xspace}
\newcommand{\utility}{\ensuremath{\mathsf{Utility}}\xspace}
\newcommand{\glue}{\ensuremath{\mathsf{LBD}}\xspace}
\newcommand{\size}{\ensuremath{\mathsf{Size}}\xspace}
\newcommand{\name}{\ensuremath{\mathsf{CausalSAT}}}
\title{Explaining SAT Solving Using Causal Reasoning}
\author{Jiong Yang}{National University of Singapore}{jiong@comp.nus.edu.sg}{}{}
\author{Arijit Shaw}{Chennai Mathematical Institute, India
\and IAI, TCG-CREST, Kolkata, India}{if.arijit@gmail.com}{}{}
\author{Teodora Baluta}{National University of Singapore}{teobaluta@comp.nus.edu.sg}{}{}
\author{Mate Soos}{National University of Singapore}{}{}{}
\author{Kuldeep S. Meel}%
{National University of Singapore}{}{}{}
\authorrunning{J. Yang et al.} %
\keywords{Satisfiability, Causality, SAT solver, Clause management} %
\newcommand{\cb}[0]{\textsf{CrystalBall}\xspace}
\begin{document}

\maketitle

\begin{abstract}

The past three decades have witnessed notable success in designing efficient SAT solvers, with modern solvers capable of solving industrial benchmarks containing millions of variables in just a few seconds. 
The success of modern SAT solvers owes to the widely-used CDCL algorithm, which lacks comprehensive theoretical investigation. Furthermore, it has been observed that CDCL solvers still struggle to deal with specific classes of benchmarks comprising only hundreds of variables, which contrasts with their widespread use in real-world applications. Consequently, there is an urgent need to uncover the inner workings of these seemingly {\em weak} yet {\em powerful} black boxes.

In this paper, we present a first step towards this goal by introducing an approach called {\name}, which employs causal reasoning to gain insights into the functioning of modern SAT solvers. {\name} initially generates observational data from the execution of SAT solvers and learns a structured graph representing the causal relationships between the components of a SAT solver. Subsequently, given a query such as whether a clause with low literals blocks distance (LBD) has a higher clause utility, {\name} calculates the causal effect of LBD on clause utility and provides an answer to the question. We use {\name} to quantitatively verify hypotheses previously regarded as ``rules of thumb'' or empirical findings, such as the query above or the notion that clauses with high LBD experience a rapid drop in utility over time. Moreover, {\name} can address previously unexplored questions, like which branching heuristic leads to greater clause utility in order to study the relationship between branching and clause management.
Experimental evaluations using practical benchmarks demonstrate that {\name} effectively fits the data, verifies four ``rules of thumb'', and provides answers to three questions closely related to implementing modern  solvers.
\end{abstract}
\section{Introduction}

Boolean Satisfiability (SAT) is a fundamental problem in computer science that involves determining whether there exists an assignment $\sigma$ that satisfies a given Boolean formula $F$. The applications of SAT are vast and varied, including but not limited to bioinformatics~\cite{lynce2006sat}, AI planning~\cite{kautz1992planning}, and hardware and system verification~\cite{biere1999symbolic,clarke2004tool}.
The seminal work of Cook~\cite{cook1971complexity} demonstrated that SAT is NP-complete. Unsurprisingly,  early algorithmic methods such as local search and the DPLL paradigm~\cite{dpll} faced significant scalability challenges in practice. In the early '90s, the introduction of Conflict Driven Clause Learning (CDCL)~\cite{silva} ushered in a new era of interest from both theoreticians and practitioners. The outcome of this development is the emergence of effective heuristics that empower SAT solvers to handle challenging instances across various domains. This remarkable advancement is widely recognized as the {\em SAT revolution}~\cite{glucose,lingeling,minisat,maple,grasp,chaff, chronobt ,silva }.

The modern CDCL SAT solvers owe their performance to well-designed and tightly integrated core components: \emph{branching}\cite{maple,grasp}, \emph{phase selection}\cite{pipatsrisawat2007lightweight}, \emph{clause learning}\cite{glucose09,luo2017effective}, \emph{restarts}\cite{glucose,gomes1998boosting,huang2007effect,maple1}, and \emph{learnt clause cleaning}~\cite{glucose18,ohthesis}. Consequently, the progress  has been driven largely by the continuous improvement of heuristics for these core components.
The annual SAT competition~\cite{satcompetition} has provided evidence of a pattern in which the development of heuristics for one core component necessitates and encourages the design of new heuristics for other components to ensure tight integration. Such progress, however, necessitated the solvers to be complex, whereas a  typical modern SAT solver is made up of approximately $15 - 30$ thousand lines of code and employs a wide variety of strategies to solve different types of problems. 
Therefore, to make further progress in the design of SAT solvers, it is crucial to understand the inner workings of these SAT solvers. 

Traditional complexity-theoretic studies focus on analyzing the limitations of SAT solvers: such as  determining the explanations for why SAT solvers struggle for certain instances~\cite{EGG+18}. These approaches, however, fail to inform reasons for their success in solving industrial-size instances.  Several prior studies provide intuitive hypotheses on why certain SAT-solving heuristics work well on some benchmarks~\cite{satzilla,crystalball,ohthesis}.
These hypotheses are mostly derived from the researcher or developer's intuition and empirically tested in an ad-hoc manner.
The tight coupling of heuristics has been a crucial part of the development of modern solvers~\cite{cms,cms20}.
For instance, Biere et al.~\cite{biere2015evaluating} found that certain branching heuristics work particularly well in combination with specific restart heuristics. Similarly, Oh~\cite{oh2015between} observed that in order to achieve the best solver performance, the decay factor used in activity calculations must be adjusted according to the active restart heuristics.

Motivated by the role played by empirical studies in the development of new heuristics, we take the approach of usage of data-driven methods to develop understandings of the behavior of SAT solvers. Our work relies on the {\cb} framework~\cite{crystalball} that provides white-box access to the execution of a state-of-the-art SAT solver, thereby enabling large-scale data collection.
Given the recent development in the field of causal reasoning~\cite{P09}, we investigate \textit{whether it is possible to develop a framework that may use white-box access to the execution of SAT solving to generate causal reasoning to explain the interplay among the features and heuristics of SAT solving?}

Several studies have investigated the behavior of modern solvers. Elffers et al.~\cite{EGG+18} explored the effectiveness of various solver components by employing a large set of theoretical benchmarks, with the running time serving as the parameter to evaluate their utility. Simon~\cite{S14} focused on analyzing the generated clauses' utility by examining the proofs produced by a SAT solver. Kokkala et al.~\cite{kokkala2020using} utilized DRAT-trim-based proofs to investigate the impact of restart and learned clause cleaning heuristics on the length of trimmed proofs. They also investigated the relationship between different parameters and the probability of a clause being included in the trimmed proof.
Soos et al.~\cite{crystalball} employed CrystalBall to deduce the statistical correlation between the utility of learned clauses and SAT-solving features.
Nevertheless, all of these studies were based on correlation analysis and lacked a causal reasoning perspective in understanding solver behavior.

In this work, we propose to use {\em causal reasoning} to answer what factors influence the utility of clauses in memory management in SAT solvers. Our approach combines prior knowledge and experimental data to derive a causal model, a graph whose nodes represent different components and heuristics of the solver, and its edges represent causal relations between them. The derived causal model explicitly encodes assumptions about the underlying phenomenon in clause memory management.
The causal model allows us to query it by applying a set of principled rules known as do-calculus to compute the effect of a particular factor on the utility of clauses. These rules ensure that each query computes the right statistical quantity given the model without introducing bias in the estimates.
We find that our derived models fit the data reasonably well even when less data is available.
Using our framework, we were able to confirm several well-known hypotheses about solvers, which highlights its potential and assures its correctness. Additionally, we have used the framework to inquire about some unresolved questions related to the solving process, and it has provided us with answers to those inquiries. This highlights that our framework provides a fresh perspective on how to approach and analyze solvers.

We demonstrate the application of our approach to study four hypotheses that stem from observations or assumptions in prior work and encode these as causal queries.
We find that clauses with lower Literal Block Distance (LBD) have greater utility; thus, lower LBD clauses should be prioritized in clause memory management. We find that small clauses have greater utility, regardless of their LBD. We also justify the hierarchical clause memory management that high LBD clauses should be cached for a short time since their utility experiences a rapid drop over time. Moreover, we verify that LBD has a larger causal impact than the size on clause utility, and hence LBD is a preferable choice for prioritizing clauses in memory management.
We additionally study three novel questions that have not been formulated before, such as which branching heuristic results in a greater clause utility, to study the relationship between branching and clause management. We find that as a branching heuristic, Maple~\cite{maple1} leads to a greater clause utility than VSIDS~\cite{chaff}. To summarize, we demonstrate that causal reasoning allows one to evaluate the crucial choices in the design of SAT solvers.

\section{Background and Motivation} \label{sec:background}

\subsection{CDCL Solvers}

CDCL-based SAT solvers begin with an initially empty set of assignments, maintaining a partial assignment at each step. The solver incrementally assigns a subset of variables until the current partial assignment is unable to satisfy the current formula. At this point, the solver employs a backtracking mechanism to trace the cause of unsatisfiability, which is expressed as a conflict clause. Modern solvers frequently perform restarts, resetting the partial assignment to empty.

Modern SAT solvers may run for millions of conflicts during a single execution, with each conflict resulting in the learning of one or more clauses that are subsequently added to a database. As this database grows over time, it becomes necessary to periodically remove some of the less useful clauses to maintain solver efficiency. To accomplish this, various heuristics have been proposed that predict the usefulness of learned clauses. These heuristics, collectively known as \textit{learned clause cleaning} heuristics, enable the solver to identify and remove unnecessary clauses, improving performance and scalability.
A \textit{branching heuristic} is employed to determine which variable to select for each decision point in the search process. A \textit{restart heuristic} is used to determine when the solver should restart the search process from the beginning.

\subsection{Learnt Clause Cleaning}
\label{sec: motivation}

One of the most crucial parts of a CDCL-based SAT solver is the learning of clauses, which occurs rapidly. In fact, a typical SAT solver can learn millions of clauses within a minute of operation. However, the sheer volume of these clauses can hinder the solver's efficiency. To address this issue, \textit{learnt clause cleaning} heuristics were developed. These heuristics evaluate the ``utility'' of a clause based on some parameters and determine whether to retain or discard it. A heuristic may prioritize keeping clauses with higher utility while discarding those with lower utility to optimize the solver's performance.
Previous literature proposed various parameters to determine the usefulness of a clause. Een et al.~\cite{minisat} employed the notion of \textit{activity (or VSIDS)} as a surrogate for utility. Activity is a parameter that indicates a clause's involvement in recent conflicts. On the other hand, Audemard et al.~\cite{glucose} utilized \textit{literal block distance (LBD)} as a proxy for utility, where LBD represents the number of unique decision levels in the clause. Despite its success, it is difficult to tell {\em why} LBD is a good indicator of utility and disentangle its impact from other solving heuristics.
To illustrate this, we highlight some of the existing hypotheses along with novel questions about the utility of clauses below.
\begin{itemize}
	\item Clauses with small LBD have greater utility.
	\item Small clauses have greater utility.
	\item High-LBD clause experiences a rapid drop in clause utility over time.
	\item LBD has a greater impact on clause utility than clause size.
	\item What factor, other than LBD, size, and activity, has the greatest impact on clause utility?
	\item Which branching heuristic results in the greatest clause utility?
	\item Which restart heuristic results in the greatest clause utility?
\end{itemize}

It is thus natural to ask if these hypotheses and questions explain the underlying SAT-solving behavior well. Do they provide new insights into other aspects of solving? We show that in order to correctly answer such questions, we require a principled framework to even compute the right statistical quantities from data.

\subsection{Causal Model}

A causal model represents the set of features that cause an outcome and the function that measures the effect of the features on the outcome.
Prior work proposes prior domain knowledge such as causal invariant transformations~\cite{wang2022out}, score-based learning algorithms~\cite{SN11}, interventional causal representation learning~\cite{AWMB22,BDHLC22,scholkopf2021toward} or learning invariant relationships from
training datasets from different distributions~\cite{arjovsky2019invariant,peters2016causal} to construct causal models.

As with probabilistic graphical models, the underlying representation of the causal model is (1) a directed acyclic graph (DAG) $G=(V,E)$ where the vertices $V$ are the set of random variables $V=\{X_1, \ldots, X_n\}$ and the edges $(X_i, X_j) \in E$ represent causal relationships $X_i$ causes $X_j$ and (2) set of parameters $\Theta$ quantifying the relationships between the variables in $V$.
The set of variables $V$ can take either discrete or continuous values.
We say that a subset of variables $S \subset V$ cause $X_i$, if $X_i=g(\{X_j\}_{j\in S}, N_Y)$ for some function $g$ and $N_Y$ a random variable independent of all other variables. The set of functions is parameterized by $\theta\in\Theta$. 
Henceforth, we will use causal models and causal graphs interchangeably.
The absence of an edge from $X$ to $Y$ represents (conditional) independence. Due to this, causal graphs are also known as causal Bayesian networks. Causal graphs still have the probabilistic interpretation of their associative (Bayesian) counterpart: they represent a joint probability distribution of the variables in $V$, determined by their conditional independence relationships, i.e., $\mathcal{P}_\theta(V)=\Pi_i\prob{X_i|pa_{X_i}}, X_i\in V$, where $pa_{X_i}$ represents the parents of the $X_i$ vertex.

\subsection*{Causal Inference}
The causal graph has a quantitative interpretation: we use it to answer {\em do-queries} that estimate the causal effect between a {\em treatment} variable and an {\em outcome} variable. 

\noindent\textbf{{\em do}-query.} We say there is a causal effect between a treatment variable $X_i$ and an outcome variable $Y$ if under an intervention on the treatment variable $X_i$, the outcome variable changes. Given a set of variables $V=\{X_1, \ldots, X_n, Y\}$, an intervention on a variable $X_i$ is an experiment where the experimenter controls the variable $X_i \in S$ to take a value $u$ of another independent (from other variables $\in V$) variable, i.e., $X_i=u$. More generally, we can intervene on a subset of variables $S \subset V$. 
This operation has been formalized as the {\em do} operator by Pearl~\cite{P09}.
The $do$ operator thus changes the value of $X_i$ while keeping every other variable in $V$ the same, except for those directly or indirectly affected by $X_i$.
This is akin to removing the edges of the nodes in $S$ to their parents in the graph $G$, resulting in a manipulated graph.
The intervention effect on the outcome variable, e.g., $\prob{Y|do(X_i=u)}$, is then measured by data generated under the distribution of the manipulated graph. 
$Do$-queries are thus fundamentally different from observational or conditional queries in that it requires being able to set the value of the variable $X_i$ to a potentially unobserved value and keeping every other unaffected variable in $V$ the same.
The framework to reason about $do$-queries has been formalized in a set of rules known as the $do$-calculus which has been shown to be sound and complete~\cite{HV06}.

\noindent\textbf{Average Treatment Effect (ATE).} Given a causal graph, a treatment variable $X$, and an outcome variable $Y$, the average treatment effect measures the change of the expected value of the outcome variable when we intervene on the treatment variable and change it from a constant value $b$ to $a$. 

\begin{definition}[Average Treatment Effect]\label{def:ate}
The average treatment effect of a variable $X$ (called the treatment) on the target variable $Y$ (called the outcome) is:
\begin{align*}
 \ATE{X, Y, a, b} = \Exp{Y|do(X = a)} - \Exp{Y|do(X = b)},
\end{align*}
\end{definition}
where $a,b$ are constants for which $X$ is defined. We omit the constants when the query is over the  domain of $X$.
Following~\Cref{def:ate}, the conditional average treatment effect (CATE) measures the average outcome conditioned on a variable $W$ of interest.
\begin{definition}[Conditional Average Treatment Effect]\label{def:cate}
	The conditional average treatment effect of $X$ on $Y$ conditioned on a variable of interest $W$ is:
	\begin{align*}
		\CATE{X, Y, W, a, b} = \Exp{Y|do(X = a), W} - \Exp{Y|do(X = b), W}.
	\end{align*}
\end{definition}

\subsection*{Why Causal Analysis?}

Deriving the right quantities to compute is challenging without a proper framework. As an example, we first consider {\glue}, {\prop}, i.e., the number of times a clause is involved in propagation, and {\lastouch}, i.e., the number of conflicts since a clause is involved in conflict analysis.
We collect data corresponding to these variables during the SAT-solving process of 80 instances by varying several of the existing heuristics.
Suppose we want to compute the impact of {\prop} on {\lastouch}, that is, the expected change of {\lastouch} if {\prop} increases by one.
This quantity can be written as the expected value of {\lastouch} when {\prop} increased from one to two: $\Exp{\lastouch \mid \prop=2}-\Exp{\lastouch \mid \prop=1}=-431.29$.

However, there is a common cause, {\glue}, that affects both {\prop} and {\lastouch}.
Having this knowledge at hand, we need to ``control'' for {\glue}. ``Controlling'' for a common cause in statistical measurements means binning over the observed values of the common cause. Thus, the right quantity to compute is:
$\sum_z ( \Exp{\lastouch \mid \prop=2, \glue=z}-\Exp{\lastouch \mid \prop=1, \glue=z} ) \Pr[\glue=z]$.
Notice that we can represent such relationships between variables using a {\em causal model}, i.e., a graph where the nodes are variables and the edges represent causal relations. We show the graph corresponding to our example in Figure~\ref{fig:example-1}.
Estimating this statistical quantity using a linear regression model that fits the data yields $-2.92$. Thus, our first attempt at estimating the effect did not model the underlying process precisely.

Let us now consider a fourth variable, {\activity} that is also correlated with {\prop} and {\lastouch} (see Figure~\ref{fig:example-2}). Should we control for this variable as well?
Suppose we controlled for it and estimated again the effect of {\prop} on {\lastouch} as the following:
\begin{align*}
    \sum_{z,a} (&\Exp{\lastouch \mid \prop=2, \glue=z, \activity=a}-\\
    &\Exp{\lastouch \mid \prop=1, \glue=z, \activity=a}) \Pr[\glue=z] \Pr[\activity=a]
\end{align*}

On the collected data, this results in an estimated effect of $-1.96$. However, {\activity} should not be controlled for since it is not a common cause, but rather it is affected by both {\prop} and {\lastouch}. Intuitively, during SAT solving, the {\activity} is computed {\em after} these two variables. The correct estimate is $\sum_z ( \Exp{\lastouch \mid \prop=2, \glue=z}-\Exp{\lastouch \mid \prop=1, \glue=z} ) \Pr[\glue=z]$ = -2.92. Over-controlling biased the estimated effect. It is thus crucial to identify the correct set of variables to control.

\begin{figure}[h!]
\captionsetup[subfigure]{justification=centering,skip=-5pt}
\centering
\begin{subfigure}{.4\textwidth}
  \centering
  \vspace{.2cm}
  \includegraphics[scale=0.3]{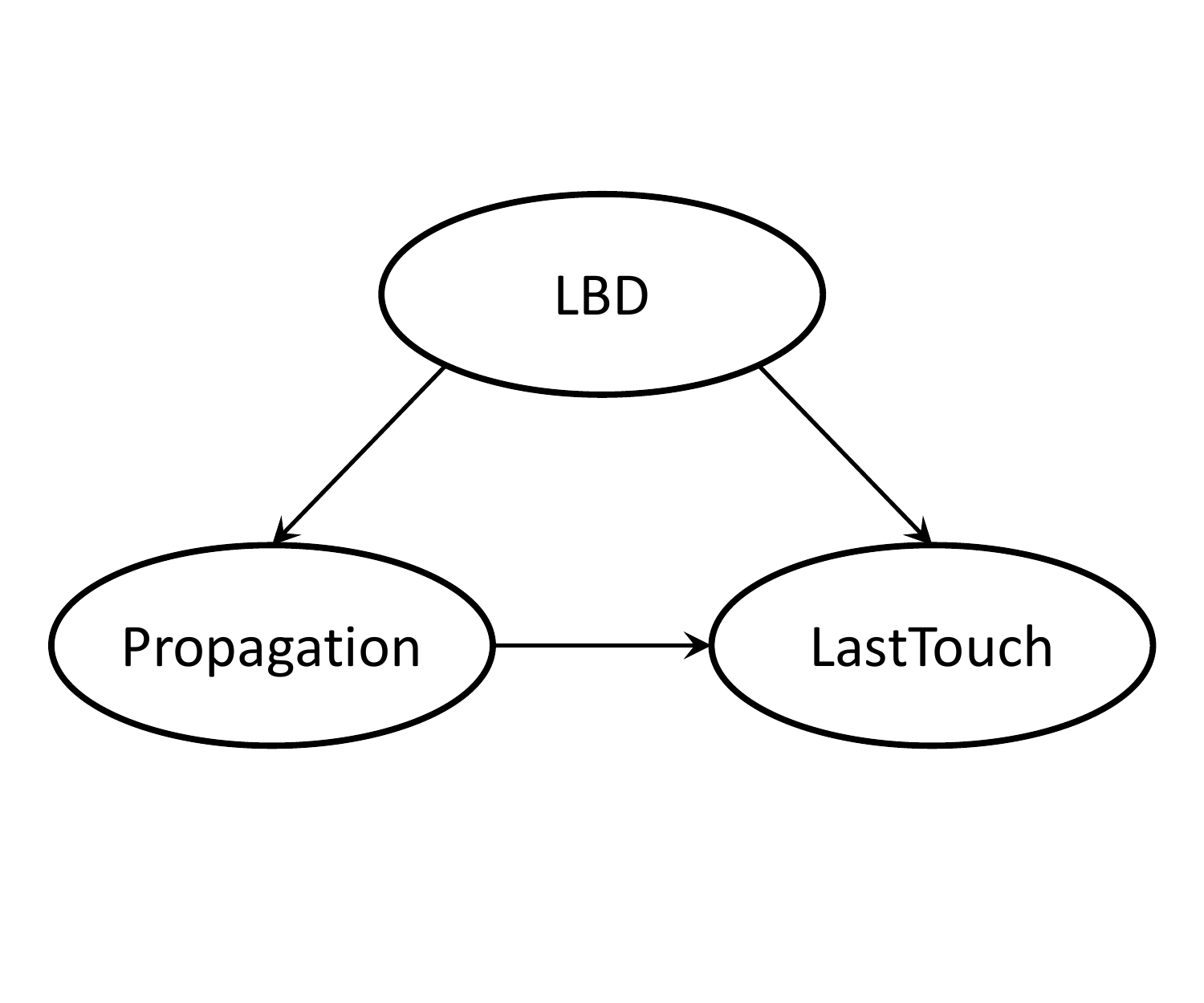}
  {\fontsize{6.5pt}{7.8pt} \medmuskip=0mu \thinmuskip=0mu \thickmuskip=0mu
    $\sum_z ( \Ex{\mathsf{LastTouch} \mid \mathsf{Propagation}=2, \mathsf{LBD}=z}$ 
    $ \phantom{abcde} -\  \mathbb{E}[\mathsf{LastTouch} \mid \mathsf{Propagation}=1, \mathsf{LBD}=z] ) \Pr[z]$\\ \ }
  \caption{}
  \label{fig:example-1}
\end{subfigure}%
\begin{subfigure}{.54\textwidth}
  \centering
  \vspace{-1cm}
  \includegraphics[scale=0.3]{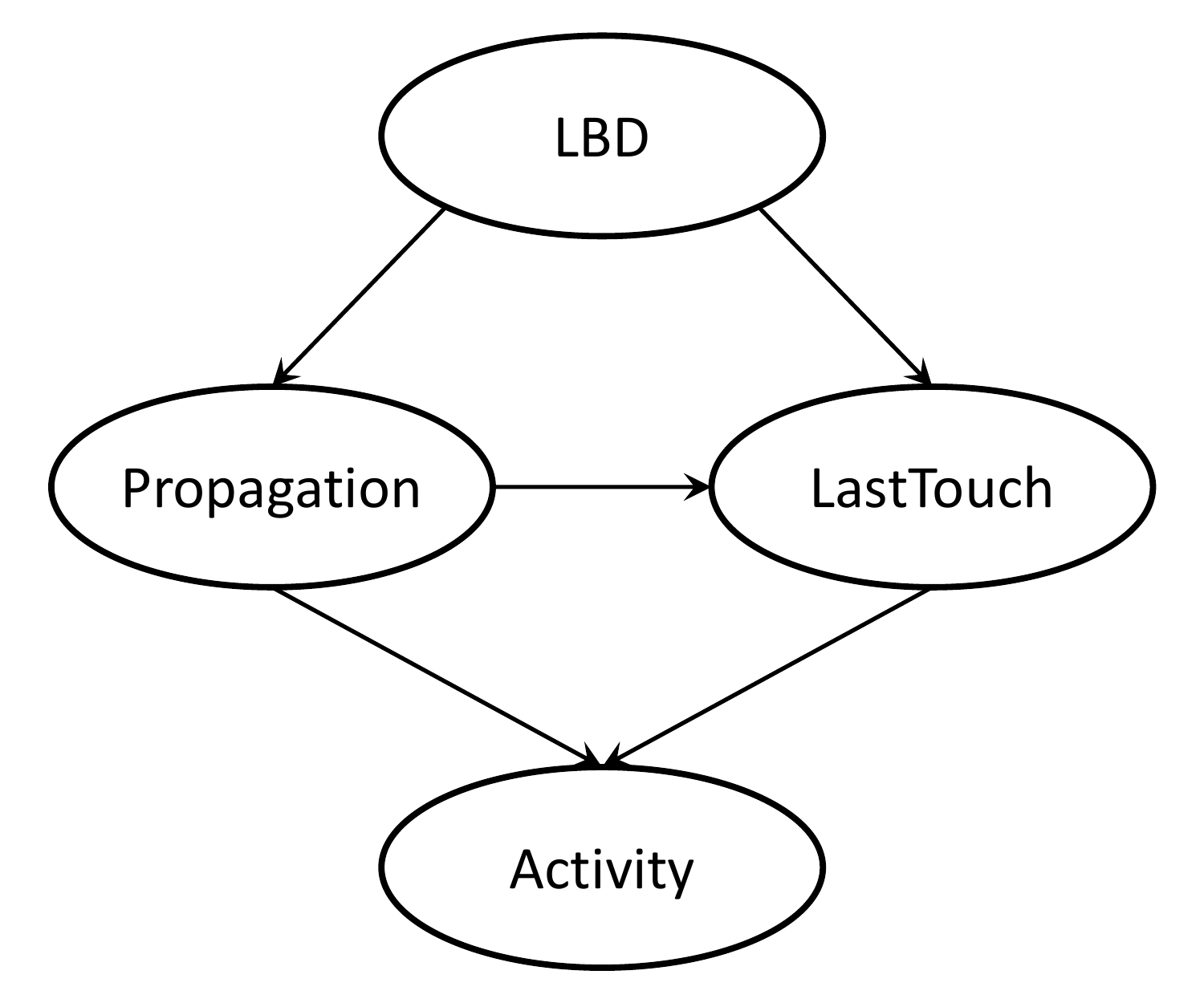}
  {\fontsize{6pt}{7.2pt} \medmuskip=0mu \thinmuskip=0mu \thickmuskip=0mu 
    $\sum_{z,a} ( \Ex{\mathsf{LastTouch} \mid \mathsf{Propagation}=2, \mathsf{LBD}=z, \mathsf{Activity}=a}$ 
    $ \phantom{abcdefghi} -\  \Ex{\mathsf{LastTouch} \mid \mathsf{Propagation}=1, \mathsf{LBD}=z, \mathsf{Activity}=a} ) \Pr[z] \Pr[a]$\\ \ }
  \caption{}
  \label{fig:example-2}
\end{subfigure}
\caption{Causal models help correctly identify the set of variables to control for. (a) Estimated effect of {\prop} on {\lastouch} when {\glue} is a cause of both. (b) Over-controlling while estimating the effect of {\prop} on {\lastouch} when {\activity} is introduced. We use the notation $\Pr[z]$ and $\Pr[a]$ as a shorthand for $\Pr[\glue=z]$ and $\Pr[\activity=a]$, respectively.}\label{fig:example} 
\end{figure}

\section{Causal Reasoning for SAT Solving} \label{sec:our-approach}
LBD, size, and activity have long been employed to predict the utility of learned clauses in SAT solvers. 
For instance, Kissat~\cite{kissat} consistently retains clauses with $\text{LBD} \le 2$, 
while CryptoMiniSat~\cite{cms} always preserves clauses with $\text{LBD} \le 3$. 
These heuristics constitute a critical component and substantially enhance the performance of modern SAT solvers. 
Despite being considered a ``rule of thumb'' for solver design, 
whether a low-LBD clause yields higher utility than a high-LBD clause still remains uncertain.

To better comprehend the functionality of these factors and promote the discovery of new factors, 
we propose a causality-based approach to examine the influence of these factors on the utility of learned clauses. 
Our objective is to use this method to address queries closely tied to the implementation of modern SAT solvers, 
such as whether a high or low LBD clause offers greater utility and what other factors significantly impact clause utility. 
The findings can then be leveraged to inform the design of SAT solvers.

\paragraph*{Overview}
We introduce an approach designed to address SAT-related inquiries. 
Figure~\ref{fig:system-architecture} displays our approach's overview, 
while Algorithm~\ref{alg: main} depicts the pseudo-code for our prototype. 
Initially, we generate observational data from a SAT solver at Line~\ref{ln: main data generation}. 
This data records factor values and clause utility (formally defined in Section~\ref{sec: data gen}).
Subsequently, we construct a causal graph representing the causal relationship between all variables, including factors and clause utility, 
Line~\ref{ln: main learn structure}, based on our prior knowledge and observational data. 
Meanwhile, we formulate each SAT-related question into a causal query at Line~\ref{ln: main formulate}. 
For instance, the LBD-related question can be formulated as a computation of the causal effect from LBD to clause utility.
From Lines~\ref{ln: main identify} to \ref{ln: main return failure}, we calculate a causal effect estimate. 
First, we identify a mathematical expression for the estimation known as an \emph{estimand}. 
Next, the estimand is evaluated on observational data to calculate the causal effect estimate. 
If the resulting estimate passes the refutation test, 
it is returned as the final causal effect; otherwise, a failure is reported.
Lastly, we return the causal effect corresponding to the original question.

\begin{figure}[h!]
	\centering
	\includegraphics[scale=0.4]{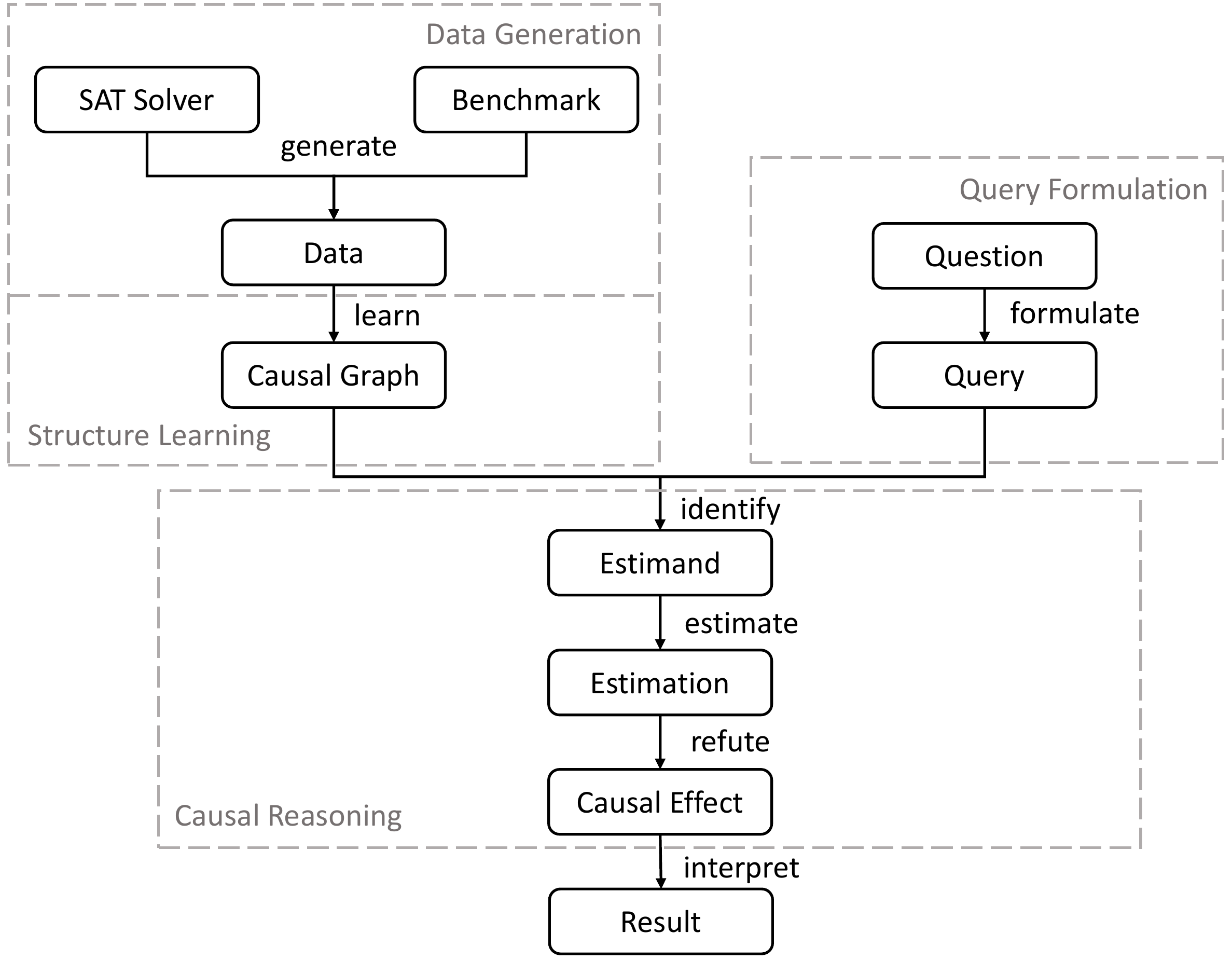}
	\caption{Our approach overview, from data generation to the causal estimate.} 
	\label{fig:system-architecture}
\end{figure}

In the subsequent subsections, we will delve into each component depicted in Figure~\ref{fig:system-architecture}. 
Section~\ref{sec: data gen} outlines the process of generating observational data from a SAT solver. 
Section~\ref{sec: structure learning} elaborates on the construction of a causal graph using our prior knowledge of SAT solvers and the generated data. 
Section~\ref{sec:query-sat} introduces the SAT-related questions of interest and describes their formulation into a causal query.
Lastly, Section~\ref{sec:causal-reasoning} demonstrates the computation of the causal effect for a given query. 

\begin{algorithm}[t]
	\caption{Our approach ($\mainalg$)}
	\label{alg: main}

	\begin{algorithmic}[1]
            \State $\data \gets$ generate observational data from a SAT solver$;$ \label{ln: main data generation}
		\State $\whitelist, \blacklist \gets \text{decode predefined and blocked edges from prior knowledge};$ \label{ln: main decode prior knowledge}
		\State $\causaldiag \gets \hillclimbing(\whitelist, \blacklist, \data);$ \label{ln: main learn structure}
            \State $\query \gets$ formulate a question$;$ \label{ln: main formulate}
            \State $\estimand \gets \identify(\query, \causaldiag)$ \label{ln: main identify}
		\State $\estimates \gets \estimate(\estimand, \causaldiag, \data);$ \label{ln: main estimate}
		\State $\pass \gets \refute(\estimates, \query, \causaldiag, \data);$ \label{ln: main refute}
		\If{$\pass$} \Return $\estimates;$ \label{ln: main return estimate}
		\Else \ \Return $\fail;$ \label{ln: main return failure}
		\EndIf
	\end{algorithmic}
\end{algorithm}

\subsection{Data Generation}
\label{sec: data gen}
To accomplish our goal of exploring causality and assess the utility of a clause, we collected data that included traces of the SAT solver's execution. These traces captured various characteristics of a clause at different stages of the instance-solving process, as well as the future utility of the clause.
We work with a set of unsatisfiable instances. For each instance, we run the SAT solver multiple times with different combinations of heuristics. To complete the data collection procedure, we employ a two-pass system. During the \emph{forward pass}, we run the SAT solver until it reaches the conclusion of unsatisfiability. 
To gather data on all the learned clauses, we deactivated the learned clause cleaning heuristic during the execution of the SAT solver. Consequently, all the learned clauses were retained throughout the solver's execution.
We store data about all the learned clauses at different snapshots of solving. However, this data lacks labeling about whether each clause was useful in proof generation. 
To address this issue, we execute a \emph{backward pass} using the proof generator DRAT-trim. In this pass, we label all the data collected in the forward pass by how many times the clause will be used in the next 10k conflicts. By combining the data from both passes, we obtain a complete dataset for our study.
As a result, each data point in our dataset includes information about the instance, the heuristics used, the learned clauses, some of its features, and the usefulness of each clause in proof generation.

\subparagraph{Features and Heuristics}
In general, the accuracy of a causal model is often improved with a larger number of features and a larger training dataset. However, many algorithms for detecting causal structures are limited in their ability to handle a large number of features. Therefore, we have selected a subset of important features that are commonly used in modern SAT solver literature. In \cref{tab:feat}, we list all the features we have in our data.
In our study, we adopted the definition of clause utility as proposed in~\cite{crystalball}, which relies on the frequency of clause usage observed in the proofs reconstructed by DRAT-trim~\cite{wetzler2014drat}. It should be noted that the proof generated by DRAT-trim may differ significantly from the proof generated by the SAT solver, necessitating caution in interpretation.

\begin{table}[!htb]
\renewcommand{\arraystretch}{1.1}
\begin{tabular}{p{0.18\linewidth}  p{0.74\linewidth}}
\toprule
{\branchheu} & Heuristics to determine the order in which variables are assigned values during the search. We consider  VSIDS~\cite{chaff} and Maple~\cite{maple1,maple2}. \\
 {\restartheu} & Heuristics to determine when the solver should restart. We consider Geometric~\cite{geomrestart} , LBD-based~\cite{glucose09}, and Luby~\cite{luby,huang2007effect} heuristics. \\ \hdashline
{\size} & number of literals in a clause. \\
{\glue} & number of distinct decision levels of literals in a clause. \\
{\activity} &  measure of clause's importance in the search process, based on the clause's involvement in recent conflicts. \\
{\uipuse} & number of times that the clause took part in a 1st-UIP conflict generation since its creation. \\
{\prop} &  number of times the clause was used in propagations.\\
{\lastouch} & number of conflicts since the clause was used during a 1st-UIP conflict clause generation. \\
{\timeinsolver} & number of conflicts since the generation of this clause.\\
{\utility} & within the next 10,000 conflicts, the number of times this clause has been used in DRAT proof generation. The number is weighted based on at which points the clauses are used. \\
\bottomrule
\end{tabular}
\caption{Data collected about clauses during solving.}
\label{tab:feat}
\end{table}

\subsection{Structure Learning}
\label{sec: structure learning}
A causal graph can be built following our prior knowledge.
For example, the clause management heuristic is a known cause for the average LBD of learned clauses.
If we preserve the clause of small LBD but remove that of large LBD during clause reduction, the average LBD will go down.
Therefore, we can add to the causal graph an edge from clause management heuristic to average LBD to represent the known causality between them.
For another instance, both branching and restart heuristics are user-defined parameters, and therefore no other variables inside the SAT solver can cause them.
We can then block incoming edges for both heuristics.
Overall, we add the following constraints based on our prior knowledge:
\begin{itemize}
	\item No incoming edges are allowed for {\branchheu} and {\restartheu} since they are given by users.
	\item No incoming edges are allowed for {\timeinsolver} because time ticks are fixed in our experiments.
	\item No incoming edges allowed for {\glue} and {\size} except for edges from {\branchheu} and {\restartheu} 
		because {\glue} and {\size} are determined before other variables.
	\item No outgoing edges are allowed for {\utility} because {\utility} follows other variables.
\end{itemize}

However, prior knowledge is insufficient to build a complete causal graph in many scenarios because the knowledge we have about the system is always incomplete, like a SAT solver.
Hence, a complete causal graph has to be learned from the observational data.
We use the hill-climbing algorithm~\cite{SD07} in our approach to learn a causal graph for its practical performance.
The algorithm starts from an initial directed graph.
In each step, we randomly add, remove, or reverse an edge of the graph to form candidate graphs.
A score function called Bayesian Information Criterion~\cite{S78}, which favors a graph with a simple structure and lower log-likelihood over the data, is then applied to these candidate graphs to select a locally optimal graph.
The process continues until a fixed point where we do not find a better graph.

Furthermore, we apply $k$-fold cross-validation to the learning process to lower the variance of the returned graph.
The dataset is initially partitioned into $k$ subsets. 
Subsequently, for each iteration, we learn a graph on $k-1$ subsets and evaluate its fitness over the remaining subset.
Finally, we obtain an averaged graph by taking a majority vote from the $k$ candidates on the existence and the direction of each edge.

There are other structure-learning algorithms available such as the PC Stable algorithm~\cite{CM14}.
However, the PC Stable algorithm always returns a graph with a few undirected edges in our experiments, which means the algorithm failed to infer the causal direction between some variables.
Due to its algorithmic nature, the hill-climbing algorithm always returns a directed graph, meeting our requirements.
A detailed comparison between structure-learning algorithms is out of the scope of this paper, 
and we refer the interested readers to~\cite{S10}.
Additionally, we present a pseudo-code of the hill-climbing algorithm in Appendix~\ref{sec:hc-alg}.

\subsection{Queries on SAT Solving} \label{sec:query-sat}
Are you curious about whether a clause with a high or low LBD possesses a greater utility? 
Alternatively, you might be interested in determining which factor—size or LBD—exerts a more significant influence on clause utility. 
Previously, one would have to rely on personal experience or experiment with heuristics to reach a conclusion. 
However, with our proposed method, you simply need to transform these inquiries into causal queries, and our causal model will provide the answers.

In this section, we initially present the different query types supported by our approach.
Following that, we explore questions closely related to SAT solver implementation 
and demonstrate the process of formulating them into causal queries.
Our primary focus lies on queries concerning clause utility, 
with the intention of examining the variables that exert a substantial influence on clause utility.

\paragraph*{Query Type}

Our causal framework accommodates three distinct query categories, as detailed below:
\begin{itemize}
    \item Average Treatment Effect (ATE) quantifies the variation in the outcome variable when the treatment variable shifts from one value to another.
    \item Conditional Average Treatment Effect (CATE) encapsulates the ATE, considering a condition for an additional variable.
    \item We propose the Averaged Conditional Average Treatment Effect (ACATE) concept to account for the need to average the CATE across all values of the conditional variable.
\end{itemize}
\begin{definition}[Averaged Conditional Average Treatment Effect]
    \label{def:acate}
    \begin{align*}
	\ACATE{X, Y, W, a, b} = \sum_w (&\Exp{Y|do(X = a), W=w} - \\ &\Exp{Y|do(X = b), W=w} ) \Pr[W=w]
    \end{align*}
\end{definition}

\subparagraph*{LBD}
In modern SAT solvers, LBD serves as a key factor for clause memory management heuristics after being introduced in Glucose~\cite{glucose09}.
It is assumed that the clause of small LBD has greater utility and is therefore favored during clause reduction.
The assumption drives us to ask the question: 
which clause, with low or high LBD, has greater utility?
The question can be translated to calculate the causal effect from LBD to clause utility, 
corresponding to an ATE query (Q1) in Table~\ref{tab:query-formulation}. 
The ATE measures the average change of the outcome (clause utility) if the treatment (LBD) changes from one to two.
Following the assumption that a clause of small LBD has greater utility, increasing LBD will decrease the utility, and hence the ATE is expected to be negative.
If the ATE is indeed negative, we prove the assumption.
Otherwise, if ATE is positive, we reach the opposite conclusion that the clause of large LBD has greater utility.
Finally, if ATE equals zero, the LBD has no effect on clause utility, and clauses with large or small LBD have the same utility.

\begin{table}[h]
    \centering
    \begin{adjustbox}{max width=\textwidth}
	\begin{tabular}{l l l}
		\toprule
		& Question & Query \\
		\midrule

            Q1 & \thead[l]{Which clause, with low or high LBD,\\ has greater utility?} 
            & $\ATE{\glue, \utility, 2, 1} < 0 $ \\ 
    
		Q2 & \thead[l]{Which type of clause, large or small,\\ has greater utility?
	       What if the LBD is fixed?} & 
            $\begin{cases}
			     \ATE{\size, \utility, 2, 1} < 0 \\
			     \ACATE{\size, \utility, \glue, 2, 1} > 0 	
		\end{cases}$ \\
  
            Q3 & \thead[l]{Which clause, with low or high LBD,\\ experiences a rapid drop in utility over time?} & 
            $\begin{cases}
		      \CATE{\timeinsolver, \utility, \glue \le 6, 10 000, 0} \ge 0 \\
			\CATE{\timeinsolver, \utility, \glue > 6, 10 000, 0} < 0
		\end{cases}$ \\

            Q4 & \thead[l]{Which factor, size or LBD,\\ has a greater impact on clause utility?}
            & $|\ATE{\size, \utility, 2, 1}| > |\ATE{\glue, \utility, 2, 1}|$ \\
  
            Q5 & \thead[l]{Which factor, besides size, LBD, and activity,\\ has the greatest impact on clause utility?} 
            & $\arg \max_{\ensuremath{\mathsf{Treatment}}\not = \utility} \{ |\ATE{\ensuremath{\mathsf{Treatment}}, \utility, 2, 1}| \}$\\

            Q6 & \thead[l]{Which branching heuristic, VSIDS or Maple,\\ results in a greater clause utility?} 
            & $\ATE{\branchheu, \utility, \text{Maple}, \text{VSIDS} }$ \\

            Q7 & \thead[l]{Which restart heuristic, Geometric, LBD-based,\\ or Luby, results in the greatest clause utility?}
            & $\begin{cases}
			\ATE{\restartheu, \utility, \text{Luby}, \text{Geometric} }\\
			\ATE{\restartheu, \utility, \text{Luby}, \text{LBD-based} }\\
                \ATE{\restartheu, \utility, \text{Geometric}, \text{LBD-based} }
		\end{cases}$ \\
		\bottomrule	
	\end{tabular}
    \end{adjustbox}
    \caption{Causal queries for questions on SAT solving.}
    \label{tab:query-formulation}
\end{table}

\subparagraph*{Size}
Modern solvers assume a small clause has higher utility than a large clause,
which brings us to the question:
which type of clause, large or small, has a greater utility?
The question is formulated into the first ATE query (Q2) in Table~\ref{tab:query-formulation}, 
which implies a decreasing utility when the clause size increases from one to two.
If the resulting ATE is negative, a small clause is proved to have greater utility.
Otherwise, a large clause has a greater utility if the ATE is positive.
Lastly, the size does not affect utility if the ATE is zero.

A historical implementation of Glucose~\cite{AS09} prioritizing large clauses of the same LBD outperformed the one favoring small clauses, 
which implies that large clauses have greater utility given a fixed LBD. 
The observation is counter-intuitive because small clauses are usually assumed to be more useful, 
and people think it still holds when LBD is fixed.
Consequently, we ask the question:
which type of clause, large or small, has greater utility given a fixed LBD?
The question is formulated into an ACATE query (Q2) in Table~\ref{tab:query-formulation},
which implies a decreasing utility when clause size increases from one to two conditioned on a fixed LBD.
The hypothesis will be verified if the ACATE is indeed positive or disproved otherwise.

\subparagraph*{LBD over Time}
Modern SAT solvers adopt multi-tier clause memory management heuristic: 
more useful clauses are preserved for a long time, 
while less useful clauses are cached for a short time. 
Clauses are classified based on LBD.
For example, Kissat~\cite{kissat} stores learned clauses with $\glue \le 6$ in the long-time tier with the rest in the short-time tier.
This observation implies that low-LBD clauses are expected to maintain their utility for a long time, 
while conversely, the utility of large-LBD clauses is expected to drop quickly over time.

The implication drives us to ask the question:
Which clause, with low or high LBD, experiences a rapid drop in utility over time?
Following that, we formulate the question into (Q3) in Table~\ref{tab:query-formulation}.
The first query formulates that small-LBD clauses do not experience a decrease in utility when time ticks from 0 to 10,000,
while the second one encodes that large-LBD clauses experience a utility drop over time.
We model the time period as 10,000 conflicts because the data is collected per 10,000 conflicts.
If the answers to two queries are both yes, the hypothesis is verified from a causal perspective.

\subparagraph*{Size vs. LBD}
In modern SAT solvers, variants of clause management heuristics are based on size and LBD,
but a question remains open: which has a greater impact on clause utility?
We formulate the question into (Q4) in Table~\ref{tab:query-formulation} to compare the causal effects on clause utility between size and LBD.
If the size (resp. LBD) has a greater effect on clause utility, we suggest clause size (resp. LBD) be involved more in clause management.
Note that we compare the effects over normalized data to eliminate the unfairness due to different orders of magnitude for size and LBD.

\subparagraph*{Beyond Size, LBD, and Activity}
Modern SAT solvers consider size, LBD, and activity as the key factors in predicting clause utility,
while other factors received less attention.
To motivate new clause management heuristics, we utilize {\name} to identify new factors for predicting clause utility.
As shown in (Q5) of Table~\ref{tab:query-formulation}, we formulate an optimization query to find the treatment 
that has the largest effect on clause utility.
The discovery may inspire the design of new clause management heuristics.

\subparagraph*{Branching Heuristics}
Modern SAT solvers can only deploy one branching heuristic at a time,
which defers the hard choice among heuristics to users. 
In this work, we provide suggestions from a causal perspective.
We ask the question: 
which branching heuristic results in a greater clause utility?
We consider two prevalent heuristics, VSIDS~\cite{chaff} and Maple~\cite{maple1,maple2}.
Following that, we formulated the question into an ATE query (Q6) in Table~\ref{tab:query-formulation}.
The query evaluates the effect on clause utility when the branching heuristic changes from VSIDS to Maple.
A positive ATE indicates Maple yields a higher utility,
while a negative value implies VSIDS demonstrates a superior utility. 
Otherwise, the ATE equals zero, which implies the branching heuristic has no impact on clause utility.

\subparagraph*{Restart Heuristics}
Considering three dominant restart heuristics Geometric~\cite{geomrestart}, LBD-based~\cite{glucose09}, and Luby~\cite{luby,huang2007effect},
we provide suggestions on the choice of restart heuristics
by asking the question:
Which restart heuristic, Geometric, LBD-based, or Luby, results in the greatest clause utility?
Following that, we formulate the question into three ATE queries (Q7) in Table~\ref{tab:query-formulation}, 
which compares the causal effects on clause utility among three heuristics.
For example, if the first two ATEs are positive, Luby leads to higher utility than Geometric and LBD-based.
Hence Luby is a preferred heuristic for high clause utility.
Similar arguments apply to Geometric and LBD-based heuristics.

\subsection{Causal Reasoning} \label{sec:causal-reasoning}

In this section, we outline the process for calculating a query with the aid of a causal graph and observational data. 
Initially, we demonstrate the identification of controlled variables that bias the effect estimation. 
Following that, we exemplify the calculation of a causal effect using linear regression and the identified variables. 
Lastly, we discuss the utilization of refutation tests for validating the employed estimation technique.

\subsubsection{Identification}

A causal query asks to calculate the effect of treatment on the outcome (with some conditions).
The naive way to do the calculation is to intervene in the treatment, i.e., change the treatment from one value to the other
and observe the variation in outcome.
However, intervening in treatment is infeasible in many applications like SAT solving,
where it's impossible to increase the size of a clause and then observe the changes in utility.
Alternatively, we can estimate the effect from the observational data.
As motivated in Section~\ref{sec: motivation}, we have to identify a set of \emph{controlled} variables to eliminate bias.

There are three methods to identify the controlled variables: backdoor, frontdoor, and instrumental variable identification.
We use backdoor identification in our approach because frontdoor and instrumental variable identification always failed to find these variables in our experiments.
Interested readers may refer to~\cite{P09b} for the details of other methods.
The backdoor algorithm returns a set of variables called \emph{backdoor set}, 
which is the smallest set separating the treatment and effect after removing the paths from the treatment to the outcome.
These paths carry the pure effect from treatment to the outcome.
A backdoor set represents a set of variables that exert an implicit effect on the treatment and outcome, such as the common cause {\glue} in Figure~\ref{fig:example-1}.
Controlling these variables ensures an unbiased estimation of the causal effect. 
Using a backdoor set, we can convert an ATE query into an expression called \emph{estimand}, 
which can be evaluated over observational data to estimate the causal effect.
Details of the backdoor identification algorithm are deferred to Appendix~\ref{sec:est-alg}.

\subsubsection{Effect Estimation}
\label{sec: estimation}
We utilize linear regression as the estimator, given its effectiveness and interpretability.
As a widely used estimator for causal effects, linear regression delivers remarkably strong performance. 
Its estimate is simple to interpret, signifying the average shift in the outcome when the treatment rises by one unit. 
Consequently, we demonstrate that the coefficient $\beta$ corresponding to the treatment variable $X$ signifies the ACATE value.
\begin{lemma}
	\label{lm:cate}
	Suppose $Z \cup W$ is a backdoor set and $\Exp{Y \mid X = x, Z = z, W = w}$ is linear, 
        i.e., $\Exp{Y \mid X = x, Z = z, W = w} = \beta_0 + \beta_1 x + \beta^T_2 z + \beta^T_3 w$. 
	We obtain 
	\begin{align*}
		\ACATE{X, Y, W, a, b} = (a-b)\beta_1
	\end{align*}
\end{lemma}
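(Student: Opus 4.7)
The plan is to combine the conditional backdoor adjustment with the assumed linear form of the outcome regression; once both are in place, the $w$-dependence and the nuisance $Z$-dependence cancel cleanly and the result drops out.

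First, since $Z \cup W$ is a backdoor set for the effect of $X$ on $Y$, the conditional backdoor adjustment formula gives
\begin{equation*}
\Exp{Y \mid do(X = x), W = w} \;=\; \sum_z \Exp{Y \mid X = x, Z = z, W = w}\,\Pr[Z = z \mid W = w].
\end{equation*}
This is the conceptual heart of the proof: one has to justify that conditioning on $W$ and then adjusting over $Z$ is a legitimate rewriting of the interventional expectation. It follows from the soundness of do-calculus together with the fact that the backdoor criterion on $Z \cup W$ implies $Z$ blocks all backdoor paths from $X$ to $Y$ in the subgraph obtained after already conditioning on $W$.

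Second, substitute the linear form $\Exp{Y \mid X = x, Z = z, W = w} = \beta_0 + \beta_1 x + \beta_2^T z + \beta_3^T w$ into the right-hand side. By linearity of the sum, the $\beta_0$, $\beta_1 x$, and $\beta_3^T w$ terms come out unchanged (they do not depend on $z$), and the $\beta_2^T z$ term collapses to $\beta_2^T \Exp{Z \mid W = w}$. Hence
\begin{equation*}
\Exp{Y \mid do(X = x), W = w} \;=\; \beta_0 + \beta_1 x + \beta_2^T \Exp{Z \mid W = w} + \beta_3^T w.
\end{equation*}
Subtracting the two expressions for $x = a$ and $x = b$ annihilates every term except $\beta_1(a - b)$, and this difference is independent of $w$. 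Weighting by $\Pr[W = w]$ and summing over the support of $W$ then simply multiplies by $\sum_w \Pr[W = w] = 1$, yielding $\ACATE{X, Y, W, a, b} = (a - b)\beta_1$.

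The main obstacle is really only the first step: one must be careful that the partition of the backdoor set into a ``conditioning'' part $W$ (inherited from the CATE/ACATE definition) and an ``adjustment'' part $Z$ is valid, rather than blindly quoting the unconditional backdoor formula. Everything afterward is mechanical algebra driven by the linearity of conditional expectation. A secondary minor point worth flagging in the write-up is that the lemma implicitly assumes the linear specification holds uniformly across the support of $Z$ and $W$ used in the adjustment; otherwise the cancellation in the second step is only approximate.
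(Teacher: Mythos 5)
Your proposal is correct and follows essentially the same route as the paper's proof: both reduce the interventional quantity to a backdoor-adjusted conditional expectation (the paper phrases this via conditional ignorability $X \coprod (Y_a, Y_b) \mid Z \cup W$ of the potential outcomes, you via the conditional backdoor adjustment formula), substitute the linear form, and observe that everything except $\beta_1(a-b)$ cancels in the difference. If anything, your weighting by $\Pr[Z=z \mid W=w]$ is slightly more careful than the paper's use of the marginal $p(z)$ in the iterated-expectation step, though the distinction is immaterial here since the $z$-dependent terms cancel.
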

We defer the proof to Appendix~\ref{sec:appendix-proof} due to the page limit.

To identify a backdoor set that incorporates the conditional variable $W$, 
we can designate $W$ as an element of the set and seek a valid backdoor set. 
In practice, we presume $\Exp{Y \mid X = x, Z = z, W = w}$ to be linear and employ $(a-b)\beta_1$ as an estimate of ACATE. 
This conclusion is also applicable to ATE and CATE, with the proof detailed in Appendix.
To enhance confidence in statistical estimations, we conduct refutation tests to verify our estimation method. 
For example, we can assess whether the estimate changes substantially upon adding an independent random variable to the data as a common cause of the treatment and outcome. Interested readers may refer to Appendix~\ref{sec:appendix-refute} for more details.

\section{Experimental Evaluation} \label{sec:experiment}

We implement a prototype called {\name} to evaluate our approach over practical instances. {\name} is built on top of a structure-learning package, \ensuremath{\mathsf{bnlearn}} ~\cite{S10}, and a causal-reasoning package, \ensuremath{\mathsf{DoWhy}}~\cite{AE19}.
We first present {\name} fits the dataset and then use it to answer the queries in Table~\ref{tab:query-formulation}.

For our experiments, we used a high-performance computing cluster. We collected traces of the run using instances from the SAT competition benchmarks. Because of the construction of our framework, we could only use unsatisfiable instances. Additionally, the process of collecting traces added significant overhead to the SAT-solving process, preventing us from gathering data from instances that took more than an hour to solve in a modern SAT solver or instances that could be solved within a few seconds. To ensure a variety of data, we chose to use $80$ UNSAT instances from the SAT competition benchmarks from 2014-'18. We allowed a 2-hour timeout for all instances to run and record the traces. Afterward, we sampled data from those traces to ensure an equal representation of all types of instances. In total, we gathered $4$ million data points.
Lastly, we ran the hill-climbing algorithm using $10$-fold cross-validation and obtained the causal graph in Figure~\ref{fig: structure hc}.

\begin{figure}[h!]
	\centering
	\includegraphics[scale=0.35]{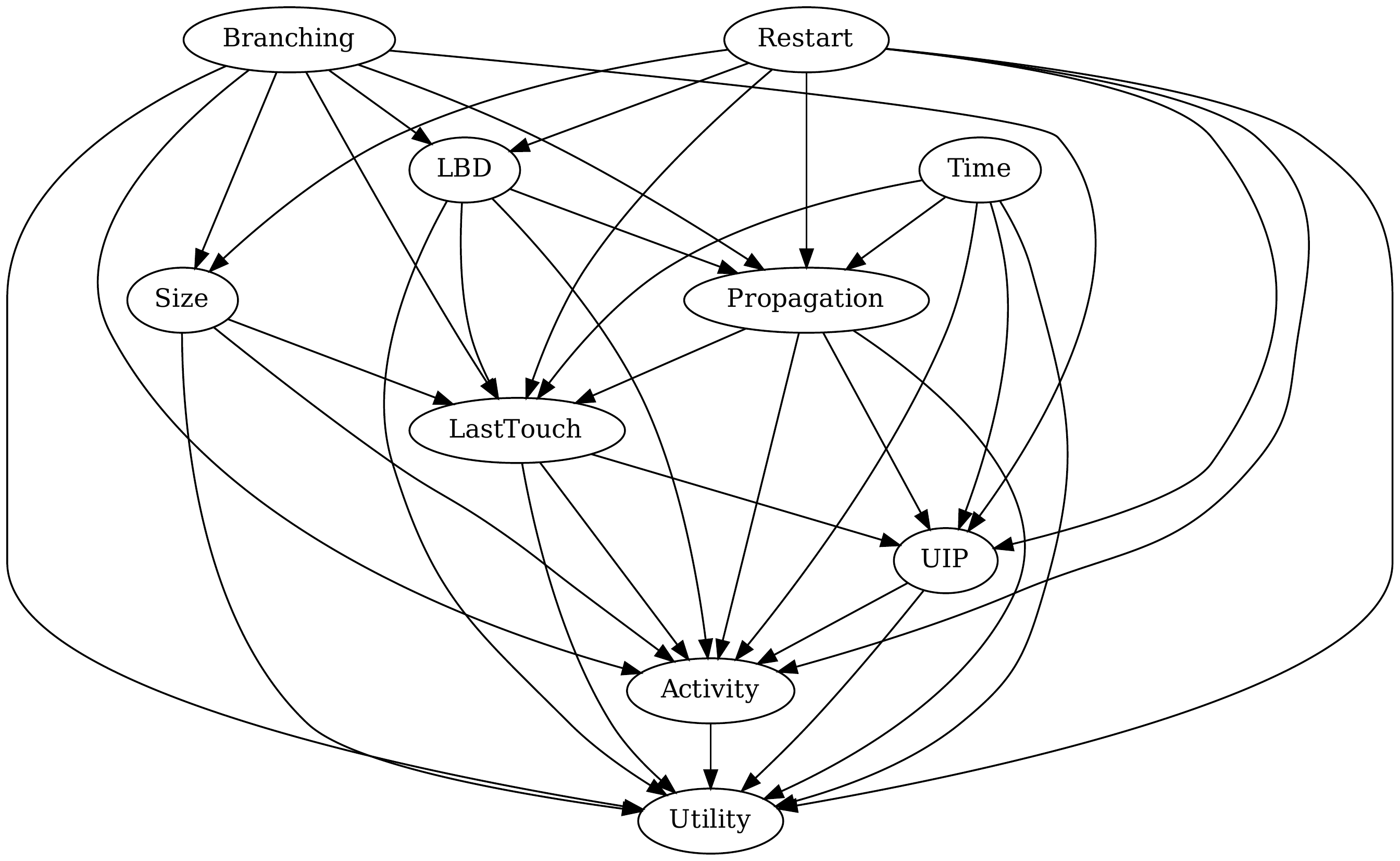}
	\caption{The causal graph {\name} derives using $10$-fold cross-validation and $4$ million points.}
	\label{fig: structure hc}
\end{figure}

\noindent\textbf{Summary.} 
Our model fits the dataset, attaining a comparable loss to both linear models and decision trees 
while exhibiting a higher Pearson correlation with clause utility than any individual variable.
With the model in place, we answer the questions in Table~\ref{tab:query-formulation}.
For LBD, a small value leads to a larger clause utility, 
but large value results in a smaller utility that drops rapidly over time.
For size, a large clause has greater utility, and the statement also holds when the LBD is fixed.
Overall, the LBD has a greater impact than the size on clause utility.
Besides LBD, size, and activity, the number of propagations has the greatest impact.
Last but not least, Maple has a greater utility between branching heuristics, and Luby achieves the highest utility among restart heuristics.

\subsection{Fitness on Data}
We evaluate the fitness of {\name} on the dataset by showing a comparable prediction error and a stronger correlation with clause utility than any individual variable.
We present the competence of {\name} with a linear model and decision tree by evaluating their performance based on the mean squared error (MSE) metric (the lower, the better).
We further demonstrate that the Pearson correlation of {\name} with clause utility is higher than that of any individual variable.

{\name} delivers a comparable prediction error with a linear model and decision tree.
We evaluate the mean squared error (MSE) of predicting clause utility for the three models.
We adopt 10-fold cross-validation where the dataset is split into ten subsets, and every time the model is trained on nine subsets and evaluated on the remaining subset.
The overall MSE is averaged over the 10 evaluations.
Table~\ref{tab:prediction-error} presents the MSE on different data sizes.
When 0.5 million data points are available, {\name} obtains an MSE of 10050.41, which is smaller than both the linear model and decision tree.
The result shows that {\name} still fits the data when we don't have much data available.
On the other hand, the MSE of the decision tree decreases significantly as the data size increases, and the decision tree finally becomes the best predictor.
Note that the prediction is not the focus of a causal model so it is expected to observe a smaller MSE from the decision tree when large data is available.

\begin{table}[h]
    \centering
    \begin{tabular}{c c c c}
        \toprule
        Data Size & Linear Model & Decision Tree & {\name} \\
        \midrule
        0.5 million & 10087.79 & 13400.43 & 10050.41 \\
        1 million & 9848.14 & 10731.84 & 9810.78 \\
        2 million & 9986.05 & 7555.20 & 9951.33 \\
        4 million & 9977.85 & 4335.32 & 9943.03 \\
        \bottomrule
    \end{tabular}
    \caption{Mean squared error in 10-fold cross-validation.}
    \label{tab:prediction-error}
\end{table}

Additionally, we evaluate the correlation with {\utility} for any individual variable and {\name}.
We compute the Pearson correlation coefficient between any individual variable and {\utility} except for {\branchheu} and {\restartheu}.
Pearson correlation coefficient is a measure of linear correlation between two variables and has a value between -1 and 1, with 1 (resp. -1) indicating a strong positive (resp. negative) linear relationship and 0 suggesting no linear relationship between the variables.
The Pearson correlation for {\name} measures the linear relationship between the {\utility} and the prediction generated by {\name}.
As shown in Table~\ref{tab:correlation}, {\name}'s prediction has a higher correlation with {\utility} than any individual variable.
Consequently, any individual variable is not a good indicator for {\utility} while {\name}, considering all variables and their causal relationship, delivers a better correlation with clause utility.

\begin{table}[h]
    \centering
    \begin{adjustbox}{max width=\textwidth}
    \begin{tabular}{c c c c c c c c c}
        \toprule
         & {\activity} & {\glue} & \lastouch & {\size} & {\prop} & {\uipuse} & {\timeinsolver} & {\name} \\
        \cmidrule(lr){2-8} \cmidrule(l){9-9}
        Correlation  & 0.2819 & -0.0285 & -0.0326 & -0.0283 & 0.2381 & 0.2557 & 0.0933 & 0.3425 \\
        \bottomrule
    \end{tabular}
    \end{adjustbox}
    \caption{Correlation with clause utility.}
    \label{tab:correlation}
\end{table}

\subsection{Query Answering}
We use {\name} to answer the questions in Table~\ref{tab:query-formulation} by calculating the causal effects for the queries.
Table~\ref{tab:query-answer} lists the calculations and answers,
where the causal estimates all pass the three refutation tests.
We use the original data to calculate ATE for a single variable due to its interpretability
while using normalized data in Q4 and Q5 to compare ATEs between variables to eliminate the unfairness of their different orders of magnitude.
We apply standard score normalization to the data.
As follows, we explain the answers in detail:
\begin{itemize}
\item (A1) LBD has a negative effect on clause utility, indicating that clauses with low LBD have greater utility. 
Hence, a low-LBD clause should be prioritized over a high-LBD clause in clause memory management.
\item (A2) Both ATE and ACATE values are negative, 
indicating that small clauses have greater utility, which also holds when LBD is fixed, though with a smaller effect. 
The answer to the fixed-LBD case contradicts the previous observation that a large clause is assumed to have greater utility when LBD is fixed.
\item (A3) {\timeinsolver} has a positive effect on {\utility} for a low-LBD clause and a negative effect for a high-LBD clause.
The results show that a high-LBD clause experiences a rapid drop in utility
while the utility of a low-LBD clause even increases over time.
Therefore, low-LBD clauses should be preserved in memory for a longer time than high-LBD clauses.
\item (A4) LBD has a larger absolute ATE value than size, indicating a greater impact on clause utility. The result suggests the usage of LBD in a clause memory management heuristic against the size from a causal perspective.
\item (A5) Appendix~\ref{sec:appendix-complete-effects} lists the ATE of all variables on clause utility over the normalized data. Among them, {\prop} has the greatest effect. The result suggests the introduction of {\prop} into the future design of clause memory management heuristics.
\item (A6) {\branchheu} has a positive effect on {\utility} when changing the strategy from VSIDS to Maple. Hence, Maple leads to greater utility and serves as a recommended branching heuristic for high-clause-utility scenarios.
\item (A7) Luby restart leads to a greater utility than Geometric and LBD-based, achieving the greatest utility. Hence, Luby is a recommended restart heuristic for high-clause-utility scenarios.
\end{itemize}

\begin{table}[h]
	\centering
	\begin{adjustbox}{max width=\textwidth}
	\begin{tabular}{c l l}
		\toprule
		& Query & Answer \\
		\midrule

            Q1 & $\ATE{\glue, \utility, 2, 1} = -0.2610 < 0 $ & Low-LBD clause has greater utility.\\ 
    
		Q2 & 
            $\begin{cases}
			     \ATE{\size, \utility, 2, 1} = -0.0314 < 0 \\
			     \ACATE{\size, \utility, \glue, 2, 1} = -0.0202 < 0 	
		\end{cases}$
            & \thead[l]{Small clause has greater utility, \\ which also holds when LBD is fixed.} \\
  
            Q3 
            & $\begin{cases}
		      \CATE{\timeinsolver, \utility, \glue \le 6, 10 000, 0} = 0.3772 > 0 \\
			\CATE{\timeinsolver, \utility, \glue > 6, 10 000, 0} = -0.0884 < 0
		\end{cases}$ 
            & \thead[l]{High-LBD clause experiences a rapid \\ drop in utility over time.} \\

            Q4
            & $|\ATE{\size, \utility, 2, 1}| = 2.6660 < |\ATE{\glue, \utility, 2, 1}| = 3.3754$
            & LBD has a greater impact than size. \\
  
            Q5
            & $\arg \max_{\ensuremath{\mathsf{Treatment}}\not = \utility} \{ |\ATE{\ensuremath{\mathsf{Treatment}}, \utility, 2, 1}| \} $
            & \thead[l]{Propagation has the greatest impact \\ on utility.} \\

            Q6
            & $\ATE{\branchheu, \utility, \text{Maple}, \text{VSIDS} } = 18.5745 > 0$ 
            & Maple leads to greater utility.\\

            Q7
            & $\begin{cases}
			\ATE{\restartheu, \utility, \text{Luby}, \text{Geometric} } = 6.7347 > 0\\
			\ATE{\restartheu, \utility, \text{Luby}, \text{LBD-based} } = 17.7385 > 0\\
                \ATE{\restartheu, \utility, \text{Geometric}, \text{LBD-based} } = 11.0037 > 0
		\end{cases}$ 
            & Luby leads to the greatest utility. \\
		\bottomrule	
	\end{tabular}
	\end{adjustbox}
	\caption{Query estimates and their interpretations.}
	\label{tab:query-answer}
\end{table}

\section{Conclusion} \label{sec:conclusion}
This work serves as the first work to utilize causality to uncover the inner workings of modern SAT solvers.
We proposed a framework to calculate the causal effects on clause utility from key factors such as LBD, size, activity, and the choice of heuristics.
The causal results verified the ``rules of thumb'' in solver implementation, for example, that a low-LBD clause is assumed to have greater utility than a high-LBD clause.
We also answered questions closely related to SAT solving, for instance, which factor, size or LBD, has a greater impact on clause utility?
The causal framework provides a systematical way to quantitatively investigate the relationship among components of a SAT solver,
which paves the way to further understanding how modern SAT solvers work.

This work opens several promising directions for future research. Our immediate focus will be enhancing our framework by introducing a new definition for clause utility. This revised definition will consider the frequency of a learned clause's usage by a SAT solver rather than its usage in DRAT proof generation. Additionally, We aim to include a definition for clause utility that encompasses both satisfiable and unsatisfiable instances, expanding beyond our current analysis of unsatisfiable cases.
Looking ahead, we intend to incorporate a broader range of features into our study and extend a further investigation into the obtained causal graph's structure and causal estimates' value to enrich our analysis.

\clearpage
\bibliography{main}

\clearpage
\appendix
\section{Hill-climbing Algorithm} \label{sec:hc-alg}

Algorithm~\ref{alg: hill climbing} illustrates the pseudo-code of the hill-climbing algorithm.
{\hillclimbing} takes three arguments: {\whitelist}, {\blacklist}, and {\data}, and returns a local-optimal causal graph fitting on {\data}.
{\whitelist} and {\blacklist} represent the prior knowledge about the causal graph.
The returned graph must contain all edges in {\whitelist} but contains none of the edges in {\blacklist}.
We use {\bestdiag} to store the best graph up to that time.
{\bestdiag} is initialized to an empty graph at Line~\ref{ln: hc init}.
Then, edges in {\whitelist} are added to {\bestdiag} at Lines~\ref{ln: hc add wl begin}-\ref{ln: hc add wl end}.
A loop is not allowed in a causal graph and we report {\fail} at Line~\ref{ln: hc cyclic fail} if the initialized graph is cyclic.
Each causal graph is associated with a score indicating the goodness of fit on {\data}.
We use Bayesian Information Criterion~\cite{S78} as the scoring function in our system.
{\bestscore} stores the score of {\bestdiag}.
Loop from Line~\ref{ln: hc loop begin} to~\ref{ln: hc loop end} finds a local optimal graph by iteratively moving to a nearby graph from {\bestdiag} in a greedy strategy.
{\currentdiag} stores the best graph from the previous iteration.
Line~\ref{ln: hc perturb} starts a loop over all possible perturbations of {\currentdiag}.
{\perturbdiag} stores the graph after a single perturbation.
{\perturbdiag} can result from adding an edge to {\currentdiag}, removing an edge from {\currentdiag}, or reverse an edge of {\currentdiag}.
If {\perturbdiag} passes the validation check at Line~\ref{ln: hc check},
we compute the score of {\perturbdiag} and compare the score to {\bestscore}.
If the score is better than {\bestscore},
we update {\bestscore} and {\bestdiag} accordingly.
Next, we continue the loop with the new \bestdiag.
The loop ends when none of the perturbed graphs is better than {\bestdiag},
which means the loop reaches a local optimal.
Finally, the local optimal graph, {\bestdiag}, is returned at Line~\ref{ln: hc return}.

\begin{algorithm}[htb]
	\caption{\hillclimbing$(\whitelist, \blacklist, \data)$}
	\label{alg: hill climbing}

	\begin{algorithmic}[1]
		\State $\bestdiag \gets \emptygraph();$  \label{ln: hc init}
		\For{$e \in \whitelist$} \label{ln: hc add wl begin}
			\State $\addedge(\bestdiag, e);$
		\EndFor \label{ln: hc add wl end}
		\If{$\bestdiag$ is cyclic}
			\State \Return \fail$;$ \label{ln: hc cyclic fail}
		\EndIf

		\State $\bestscore \gets \computescore(\bestdiag, \data);$
		\Repeat	\label{ln: hc loop begin}
			\State $\updated \gets \false;$
			\State $\currentdiag \gets \bestdiag;$
			\For{$\perturbdiag \gets \text{add, remove, or reverse an edge of } \currentdiag$} \label{ln: hc perturb}
				\If{$\perturbdiag$ is acyclic \pseudoand $\perturbdiag$ contains all edges in $\whitelist$ \pseudoand $\perturbdiag$ contains none of edges in $\blacklist$} \label{ln: hc check}
					\State $\score \gets \computescore(\perturbdiag, \data)$
					\If{$\score > \bestscore$}
						\State $\bestscore \gets \score;$
						\State $\bestdiag \gets \perturbdiag;$
						\State $\updated \gets \true;$
					\EndIf
				\EndIf
			\EndFor
		\Until{$\updated = \false$} \label{ln: hc loop end}

		\State \Return $\bestdiag;$ \label{ln: hc return}
	\end{algorithmic}
\end{algorithm}

\section{Identification and Estimation Algorithm} \label{sec:est-alg}

Algorithm~\ref{alg: estimate} presents the pseudo-code of the identification and estimation algorithm using backdoor and linear regression.
If there is no directed path from the treatment to the outcome in the causal graph,
we return zero at Line~\ref{alg: estimate zero}.
Otherwise, the algorithm first identifies a probability expression termed \emph{estimand} that can be evaluated over observational data to estimate the causal effect, and then uses linear regression to compute the estimand.
There are three methods to identify the expression: backdoor, frontdoor, and instrumental variable identification.
We only present the pseudo-code of backdoor identification at Lines~\ref{alg: backdoor begin}-\ref{alg: backdoor end}.
Interested readers may refer to~\cite{P09b} for the details of other methods.

For simplicity, we assume no unobserved variables in the graph.
A backdoor set is identified at Line~\ref{alg: backdoor set}, 
which is the smallest set \emph{d-separating} the treatment and outcome after removing the paths from the treatment to the outcome.
We say that two variables $X$ and $Y$ are d-separated by a disjoint subset $S$ if every path between $X$ and $Y$ is blocked by $S$~\cite{P09b}.
A backdoor set is a set of variables that affect the estimate of the causal effect from the treatment to the outcome.
Given a backdoor set $Z$, we can convert an ATE query into the following estimand:
\begin{align*}
    \ATE{X, Y, a, b} = \sum_z (\Ex{Y|X = a, Z = z} - \Ex{Y|X = b, Z = z})\Pr[Z=z].
\end{align*}
Following that, we run a linear regression to evaluate the estimand at Line~\ref{alg: linear regression}, 
and the coefficient of treatment scaled by $(a-b)$ is exactly the \emph{average treatment effect (ATE)} at Line~\ref{alg: lr treat coef}.

\begin{algorithm}[htb]
	\caption{$\idfyandest(\query, \causaldiag, \data)$}
	\label{alg: estimate}
	
	\begin{algorithmic}[1]
        \State $\treat, \outcome \gets \text{decode treatment and outcome variable from } \query;$
		\If{No directed path from {\treat} to {\outcome} in {\causaldiag}} 
			\State \Return $0;$ \label{alg: estimate zero} \Comment{Causal Effect is zero.} 
		\EndIf
		
		\State $\eligibleset \gets \emptyset;$ \label{alg: backdoor begin}
		\For{$\bdoorvar \gets$ any node in {\causaldiag}}
			\If{{\bdoorvar} is not {\treat} \pseudoand {\bdoorvar} is not {\outcome}}
				\If{{\bdoorvar} is not a descendant of {\treat}}
					\If{{\bdoorvar} is not d-separated from {\treat} \pseudoor {\bdoorvar} is not d-separated from {\outcome}}
						\State $\addtolist(\eligibleset, \bdoorvar);$ 
					\EndIf
				\EndIf
			\EndIf
		\EndFor
		
		\State $\bdoorsize \gets 1;$
		\State $\bdoorgraph \gets$ remove outgoing edges of {\treat} in {\causaldiag}$;$
		\Repeat
			\For{$\candset \gets$ any {\bdoorsize} variable(s) from {\eligibleset}}
				\If{{\treat} and {\outcome} are d-separated by {\candset} in {\bdoorgraph}}
					\State $\bdoorset \gets \candset;$ \label{alg: backdoor set}
					\State {\pseudobreak}$;$
				\EndIf
			\EndFor
			\State $\bdoorsize \gets \bdoorsize + 1;$
		\Until{$\bdoorsize > \sizeof(\eligibleset)$} \label{alg: backdoor end}
		
		\State $\coefs \gets \linreg(\outcome, \treat+\bdoorset);$ \label{alg: linear regression}
		\State $\estimates \gets \getcoef(\coefs, \treat) \times (a-b);$ \label{alg: lr treat coef}
		\Comment{get {\treat}'s coefficient.}
		\State \Return $\estimates;$
	\end{algorithmic}
\end{algorithm}

\section{Proof of Lemma~\ref{lm:cate}} \label{sec:appendix-proof}

We present a complete proof for Lemma~\ref{lm:cate}, considering the general case when $W$ and $Z$ are continuous variables:
\begin{proof}
        Following the definition of the backdoor set, we have 
        \begin{align*}
            X \coprod (Y_a, Y_b) \mid Z \cup W,
        \end{align*}
        where $Y_i$ denotes $Y \mid do(X = i)$. Then, we can obtain
	\begin{align}
		\ACATE{X, Y, W, a, b} &= \int_w \Exp{Y_a - Y_b \mid W=w} p(w) dw \nonumber \\
		&= \int_w \int_z \Exp{Y_a - Y_b \mid Z=z, W=w} p(z) dz \cdot p(w) dw \nonumber \\
		&= \int_w \int_z \Exp{Y_a \mid X=a, Z=z, W=w} p(z) dz \cdot p(w) dw \nonumber \\
		&- \int_w \int_z \Exp{Y_b \mid X=b, Z=z, W=w} p(z) dz \cdot p(w) dw \label{eq:addx}\\
		&= \int_w \int_z \Exp{Y \mid X=a, Z=z, W=w} p(z) dz \cdot p(w) dw \nonumber \\
		&- \int_w \int_z \Exp{Y \mid X=b, Z=z, W=w} p(z) dz \cdot p(w) dw \nonumber \\
		&= \int_w \int_z \left ( \beta_0 + \beta_1a + \beta^T_2 z + \beta^T_3 w \right ) p(z) dz \cdot p(w) dw \nonumber \\
		&- \int_w \int_z \left ( \beta_0 + \beta_1b + \beta^T_2 z + \beta^T_3 w \right ) p(z) dz \cdot p(w) dw \nonumber \\
		&= (a-b)\beta_1 \nonumber 
	\end{align}
	where~\Cref{eq:addx} follows from $X \coprod (Y_a, Y_b) \mid Z \cup W$, and $p(z)$ (resp. $p(w)$) represents the probability density function of variable $Z$ as $Z=z$ (resp. variable $W$ as $W=w$).
\end{proof}

Following that, we present the proof for ATE, while CATE can be considered an ATE over the observation data filtered by the condition.
\begin{lemma}
	\label{lm:ate}
	Suppose $Z$ is a backdoor set and $\Exp{Y \mid X = x, Z = z}$ is linear, \\
        i.e., $\Exp{Y \mid X = x, Z = z} = \beta_0 + \beta_1 x + \beta^T_2 z$. 
	We obtain 
	\begin{align*}
		\ATE{X, Y, a, b} = \Ex{Y \mid do(X = a)} - \Ex{Y \mid do(X = b)} = \beta_1
	\end{align*}
\end{lemma}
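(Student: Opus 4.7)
\textbf{Proof proposal for Lemma~\ref{lm:ate}.} The plan is to follow exactly the template of the proof of Lemma~\ref{lm:cate}, just specialized to the case where there is no conditioning variable $W$. So the intended proof is a strict simplification: drop the outer integral over $w$ and the $\beta_3^T w$ term from the linear-regression hypothesis, and everything else goes through verbatim.

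First, I would invoke the backdoor criterion: since $Z$ is a backdoor set for $(X,Y)$, we have the counterfactual independence $X \coprod (Y_a, Y_b) \mid Z$, where $Y_i$ denotes $Y \mid do(X=i)$. Next, I would expand the ATE using the definition together with the law of total expectation, conditioning on $Z$:
\begin{align*}
\ATE{X,Y,a,b} &= \Exp{Y_a} - \Exp{Y_b} \\
&= \int_z \Exp{Y_a \mid Z=z}\, p(z)\, dz - \int_z \Exp{Y_b \mid Z=z}\, p(z)\, dz.
\end{align*}

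Then the backdoor independence lets me add the conditioning on $X$ freely, turning the counterfactual expectations into observational ones:
\begin{align*}
\Exp{Y_a \mid Z=z} &= \Exp{Y_a \mid X=a, Z=z} = \Exp{Y \mid X=a, Z=z},
\end{align*}
and symmetrically for $b$. At this point I would substitute the assumed linear form $\Exp{Y \mid X=x, Z=z} = \beta_0 + \beta_1 x + \beta_2^T z$, so the two integrands differ only in the $\beta_1 x$ term; the $\beta_0$ and $\beta_2^T z$ contributions cancel pointwise in $z$, and integrating the constant difference $\beta_1(a-b)$ against $p(z)$ yields $(a-b)\beta_1$.

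There is essentially no obstacle here, since this is the simpler sibling of Lemma~\ref{lm:cate}; the only point worth stating explicitly is that the lemma as printed writes $\beta_1$ rather than $(a-b)\beta_1$, so I would treat it as a convention where $a,b$ are taken one unit apart (consistent with the one-unit interpretation of a regression coefficient used in Section~\ref{sec: estimation}), while the derivation itself naturally produces the scale factor $(a-b)$. The only step requiring any care is the invocation of the backdoor independence to rewrite $\Exp{Y_a \mid Z=z}$ as an observational conditional expectation; everything else is routine manipulation of the linear form.
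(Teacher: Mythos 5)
Your proposal is correct and follows essentially the same route as the paper's own proof: backdoor independence $X \coprod (Y_a, Y_b) \mid Z$, law of total expectation over $Z$, consistency to convert counterfactual to observational conditional expectations, and substitution of the linear form. Your remark about the scale factor is also on point --- the paper's own derivation likewise ends at $(a-b)\beta_1$ even though the lemma statement prints $\beta_1$, so this is a typo in the statement rather than a gap in your argument.
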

\begin{proof}
        Following the definition of the backdoor set, we have 
        \begin{align*}
            X \coprod (Y_a, Y_b) \mid Z,
        \end{align*}
        where $Y_i$ denotes $Y \mid do(X = i)$. Then, we can obtain
	\begin{align}
		\ATE{X, Y, a, b} &= \Exp{Y_a - Y_b} \nonumber \\
		&= \int_z \Exp{Y_a - Y_b \mid Z=z} p(z) dz \nonumber \\
		&= \int_z \Exp{Y_a \mid X=a, Z=z} p(z) dz \nonumber \\
		&- \int_z \Exp{Y_b \mid X=b, Z=z} p(z) dz \label{eq:addx-ate}\\
		&= \int_z \Exp{Y \mid X=a, Z=z} p(z) dz \nonumber \\
		&- \int_z \Exp{Y \mid X=b, Z=z} p(z) dz \nonumber \\
		&= \int_z \left ( \beta_0 + \beta_1a + \beta^T_2 z \right ) p(z) dz \nonumber \\
		&- \int_z \left ( \beta_0 + \beta_1b + \beta^T_2 z \right ) p(z) dz \nonumber \\
		&= (a-b)\beta_1 \nonumber 
	\end{align}
	where~\Cref{eq:addx-ate} follows from $X \coprod (Y_a, Y_b) \mid Z$, and $p(z)$ represents the probability density function of variable $Z$ as $Z=z$.
\end{proof}

\section{Refutation Test} \label{sec:appendix-refute}

We conduct diverse refutation tests to enhance confidence in our statistical estimations.
Our system incorporates three types of tests: adding random common cause, placebo treatment, and data subsets validation.
Each test outputs a p-value, and we have enough evidence to refute the estimation should the p-value be less than 0.05.

The first approach assesses whether the estimate alters substantially upon adding an independent random variable to the data as a common cause of the treatment and outcome. 
If the estimation method is effective, it should yield a similar estimate as previously. 
Should the p-value be less than 0.05, we have enough evidence to say the new and prior estimates are different and refute the estimation.
The placebo treatment method evaluates whether the estimate approaches zero when the treatment is substituted with an independent random variable. 
If the p-value is less than 0.05, the new estimate statistically deviates from zero, and the estimation is refuted.
The final test involves sampling a random subset from the data. 
A less-than-0.05 p-value indicates the new estimate on the subset significantly diverges from the prior estimate, and hence the estimation is rejected.

If the estimate successfully passes all three refutation tests, we consider it reliable and return it as the causal effect. 
Conversely, we report a failure if the estimate is refuted by any refutation test.

\section{Causal Effects on Clause Utility} \label{sec:appendix-complete-effects}

Table~\ref{tab:all-on-utility} lists the ATEs of all variables on {\utility}.
We apply standard score normalization to the dataset in order to compare effects among different variables without bias from different orders of magnitude.
\begin{table}[h]
	\centering
	\begin{tabular}{l c c}
		\toprule
		Treatment & Effect & Normalized Effect \\
		\midrule
		{\activity} & 4.1246 & 22.0866 \\
		  {\branchheu}: VSIDS $\rightarrow$ Maple & 18.5745 & 18.5745 \\
		{\glue} & -0.2610 & -3.3754 \\
		{\lastouch} & $-1.8921 \times 10^{-5}$ & -2.2492 \\
		{\restartheu}: LBD-based $\rightarrow$ Geometric & 11.0037 & 11.0037 \\
		{\restartheu}: Geometric $\rightarrow$ Luby & 6.7347 & 6.7347 \\
		{\restartheu}: LBD-based $\rightarrow$ Luby & 17.7385 & 17.7385\\
		{\size} & -0.0314 & -2.6660 \\
		{\prop} & 0.0065 & 25.0221 \\
		{\uipuse} & 0.0187 & 18.5992 \\
		{\timeinsolver} & $3.9960 \times 10^{-5}$ & 9.8995 \\
		\bottomrule
	\end{tabular}
	\caption{$\ATE{\ensuremath{\mathsf{Treatment}}, \utility, 2, 1}$ over original and normalized data. We apply standard score normalization to the dataset.}
	\label{tab:all-on-utility}
\end{table}

\end{document}